\newcommand{\REAL}{\mathbb{R}}
\newcommand{\prob}[1]{\Pr\left[#1\right]}
\newcommand{\x}{\alpha}
\newcommand{\y}{\beta}
\newcommand{\z}{\gamma}
\newcommand\reallywidehat[1]{%
\savestack{\tmpbox}{\stretchto{%
  \scaleto{%
    \scalerel*[\widthof{\ensuremath{#1}}]{\kern-.6pt\bigwedge\kern-.6pt}%
    {\rule[-\textheight/2]{1ex}{\textheight}}
  }{\textheight}%
}{0.5ex}}%
\stackon[1pt]{#1}{\tmpbox}%
}
\title{Categorical Feature Compression via Submodular Optimization}
\author{\name MohammadHossein~Bateni\thanks{Authors are listed in alphabetical order.} \email bateni@google.com \\ \addr Google Research 
\AND
\name Lin~Chen \email lin.chen@yale.edu\\ \addr Google Research and Yale University
\AND
\name Hossein~Esfandiari \email esfandiari@google.com\\
\addr Google Research
\AND
Thomas~Fu \email thomasfu@google.com\\
\addr Google Research 
\AND
\name Vahab~S.~Mirrokni \email mirrokni@google.com\\
\addr 
Google Research
\AND
\name Afshin~Rostamizadeh \email rostami@google.com\\
\addr Google Research}
\begin{document}

\maketitle

\begin{abstract}
In the era of big data, learning from categorical features with very large vocabularies (e.g., 28 million for the Criteo click prediction dataset) has become a practical challenge for machine learning researchers and practitioners.  We design a highly-scalable vocabulary compression algorithm that seeks to maximize the mutual information between the compressed categorical feature and the target binary labels and we furthermore show that its solution is guaranteed to be within a $1-1/e \approx 63\%$ factor of the global optimal solution.  To achieve this, we introduce a novel re-parametrization of the mutual information objective, which we prove is submodular, and design a data structure to query the submodular function in amortized $O(\log n )$ time (where $n$ is the input vocabulary size). Our complete algorithm is shown to operate in $O(n \log n )$ time. Additionally, we design a distributed implementation in which the query data structure is decomposed across $O(k)$ machines such that each machine only requires $O(\frac n k)$ space, while still preserving the approximation guarantee and using only logarithmic rounds of computation.  We also provide analysis of simple alternative heuristic compression methods to demonstrate they cannot achieve any approximation guarantee.  Using the large-scale Criteo learning task, we demonstrate better performance in retaining mutual information and also verify competitive learning performance compared to other baseline methods.
\end{abstract}

\section{Introduction}

In modern large scale machine learning tasks, the presence of
categorical features with extremely large vocabularies is a standard
occurrence. For example, in tasks such as product recommendation and
click-through rate prediction, categorical variables corresponding to
inventory id, webpage identifier, or other such high cardinality
values, can easily contain anywhere from hundreds of thousands to tens
of millions of unique values. The size of machine learning models
generally grows at least linearly with the vocabulary size and, thus,
the memory required to serve the model, the training and inference
cost, as well as the risk of overfitting become an issue with very
large vocabularies.  In the particular case of neural networks model,
one generally uses an embedding layer to consume categorical inputs.
The number of parameters in the embedding layer is $O(nh)$, where $n$
is the size of the vocabulary and $h$ is the number of units in the
first hidden layer.

To give a concrete example, the Criteo click prediction benchmark has
about 28 million categorical feature values \citep{criteo}, thus
resulting in an embedding layer more than 1 billion parameters for a
modestly sized first hidden layer. Note, this number dwarfs the number
of parameters found in the remainder of the neural
network. Again, to give a concrete example, even assuming a very deep
fully connected network of depth $10^2$ with hidden layers of size
$10^3$, we have $(10^3 \times 10^3) 10^2 = 10^8$ parameters in the hidden
network -- still an order of magnitude smaller than the embedding
layer alone.  This motivates the problem of compressing the vocabulary
into a smaller size while still retaining as much
information as possible.

In this work, we model the compression task by considering the problem
of maximizing the mutual information between the compressed version of
the categorical features and the target label.  We first observe a
connection between this problem and the quantization problem for discrete
memoryless channels, and note a polynomial-time algorithm for the
problem~\citep{kurkoski2014,iwata2014}. The resulting algorithm, however, is based on solving a
quadratic-time dynamic program, and is not scalable. Our main goal in
this paper is to develop a scalable and distributed algorithm with a
guaranteed approximation factor.  We achieve this
goal by developing a novel connection to submodular
optimization. Although in some settings, entropy-based set functions
are known to be submodular, this is not the case for the mutual
information objective we consider (mutual information with respect to
the target labels).  Our main insight is in proving the submodularity
of a particular transformation of the mutual information objective,
which still allows us to provide an approximation guarantee on the
quality of the solution with respect to the original objective. We
also provide a data structure that allows us to query this newly
defined submodular function in amortized logarithmic time. This
logarithmic-time implementation of the submodular oracle empowers us
to incorporate the fastest known algorithm for submodular
maximization~\citep{mirzasoleiman2015}, which leads us to a sequential
quasi-linear-time $(1-1/e-\epsilon)$-approximation algorithm for
binary vocabulary compression. Next, we provide a distributed
implementation for binary vocabulary compression. Previous
distributed algorithms for submodular maximization assume a direct
access the query oracle on every machine (e.g.,
see~\citep{barbosa2015power,mirrokni2015,mirzasoleiman2013}). However, the query oracle
itself requires $O(n)$ space, which may be restrictive in the large
scale setting. In this work, we provide a truly distributed implementation
of the submodular maximization algorithm
of~\citep{badanidiyuru2014fast} (or similarly ~\citep{kumar2015fast}) for our application by distributing the query oracle. In this
distributed implementation we manage to decompose the query oracle
across $O(k)$ machines such that each machine only requires $O(\frac n k)$
space to store the partial query oracle. As a result, we successfully
provide a distributed $(1-1/e-\epsilon)$-approximation algorithm for
vocabulary compression in logarithmic rounds of computation.  Our
structural results for submodularity of this new set function is the
main technical contribution of this paper, and can also be of
independent interest in other settings that seek to maximize mutual
information.

We also study a number of heuristic and baseline algorithms for the
problem of maximizing mutual information, and show that they do not
achieve a guaranteed approximation for the problem.
Furthermore, we study the empirical performance
of our algorithms on two fronts: First, we show the effectiveness of
our greedy scalable approximation algorithm for maximizing mutual
information. Our study confirms that this algorithm not only achieves
a theoretical guarantee, but also it beats the heuristic algorithms
for maximizing mutual information. Finally, we examine the performance
of this algorithm on the vocabulary compression problem itself, and
confirm the effectiveness of the algorithm in producing a high-quality
solution for vocabulary compression large scale learning tasks.

{\bf \noindent Organization.} In the remainder of this section we
review related previous works and introduce the problem formally along
with appropriate notation. Then in Section~\ref{sec:algorithms}, we
introduce the novel compression algorithm and corresponding
theoretical guarantees as well as analysis of some basic heuristic
baselines. In Section~\ref{sec:empirical}, we present our empirical
evaluation of optimizing the mutual information objective as well as
an end-to-end learning task.


\subsection{Related Work}
{\bf Feature Clustering:}
The use of vocabulary compression has been studied previously,
especially in text classification applications where it is commonly
known as feature (or word) clustering. In particular,
\citet{baker1998} and \citet{slonim2001} both propose agglomerative
clustering algorithms, which start with singleton clusters that are
iteratively merged using a Jenson-Shannon divergence based function to
measure similarity between clusters, until the desired number of
clusters is found. Both algorithms are greedy in nature and do not
provide any guarantee with respect to a global objective. In
\citep{dhillon2003}, the authors introduce an algorithm that
empirically performs better than the aforementioned methods and that
also seeks to optimize the same global mutual information objective
that is analyzed in this work.  Their proposed iterative algorithm is
guaranteed to improve the objective at each iteration and arrive at a
local minimum, however, no guarantee with respect to the global
optimum is provided. Furthermore, each iteration of the algorithm
requires $O(mn)$ time (where $m$ is the size of the compressed
vocabulary) and the number of iterations is only guaranteed to be
finite (but potentially exponential). Later in this work, we compare
the empirical performance of this algorithm with our proposed method.

\begin{figure}[hbt]
	\centering
	\includegraphics[width=0.8\textwidth]{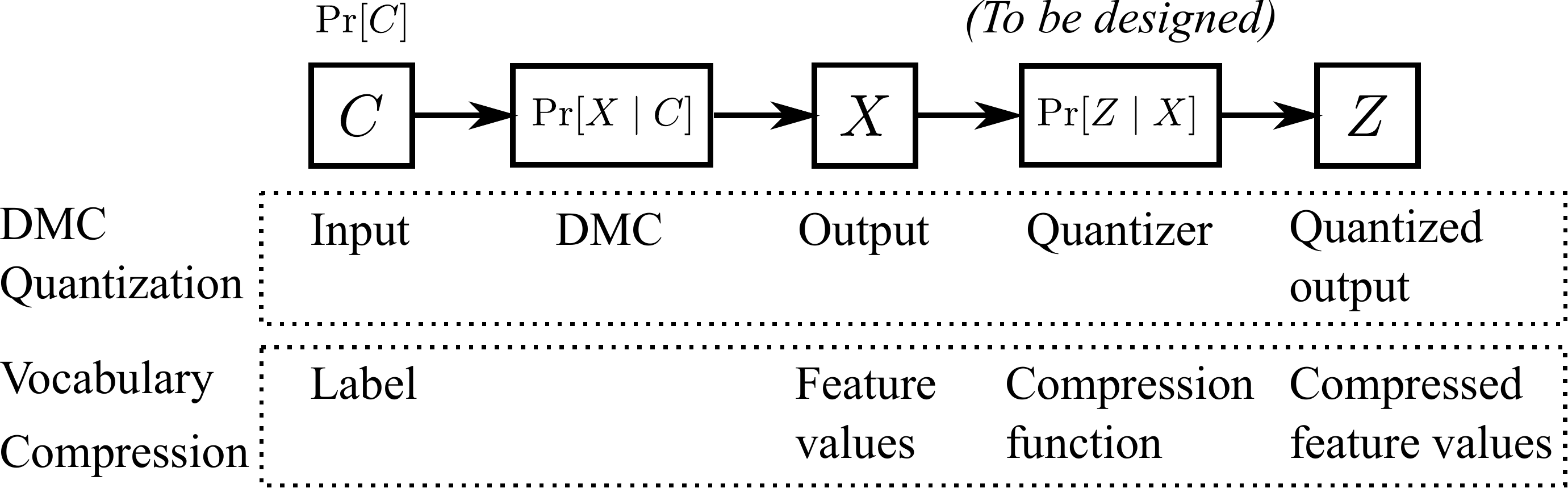}
	\caption{Translation of terminologies of the DMC quantizer design problem and the feature compression problem.}
	\label{fig:translation_dmc_vocab}
\end{figure}

{\bf Compression in Discrete Memoryless Channels:}
An area from information theory that is closely related to our
vocabulary compression problem, and which our algorithm draws
inspiration from, is compression in a discrete memoryless channels
(DMC) \citep{cicalese2018,zhang2016,iwata2014}. In this problem, we
assume there is a DMC which (in machine learning terminology) receive
a class label and produces a corresponding categorical feature value
drawn according to a fixed underlying distribution. The goal is to
design a \emph{quantizer} that maps the space of categorical features
in lower cardinatility set, while preserving as much of the mutual
information between the class label and newly constructed vocabulary.
In Figure~\ref{fig:translation_dmc_vocab}, we present a diagram that
illustrates the DMC quantization problem and vocabulary compression
problem as well as the translation of terminologies of these two
problems.  The results of \citet{kurkoski2014} are of particular
interest, as they show a cubic-time dynamic programming based
algorithm is able to provide an optimal solution in the case of binary
labels.  Following this work, \citet{iwata2014} improve the
computational complexity of this approach to quadratic time using the
SMAWK algorithm~\citep{aggarwal1987geometric}.  Such algorithms are
useful in the smaller scale regime, however, the use of a cubic-  or
even quadratic-time algorithm will be infeasible for our massive
vocabulary size use cases. Finally, \citet{mumey2003} shows that in
the general case of greater than two class labels, finding the optimal
compression is NP-complete. In this work, we will be focusing on the
binary label setting.

{\bf Feature Selection:}
A related method for dealing with very large vocabularies is to do
feature selection, in which we simply
select a subset of the vocabulary values and remove the rest (see
\citep{guyon2003} and the
many references therein). One can view this approach as a
special case of vocabulary compression, where we are restricted to
only singleton ``clusters''. Restricting the problem by selecting a
subset of the vocabulary may have some benefits, such as potentially
simplifing the optimization problem and the use of a simple filter to
transform data at inference time. However, the obvious downside to
this restriction is the loss of information and potentially poorer
learning performance (see introduction of \citep{jiang2011}). In this
work we focus on the more general vocabulary compression setting.

{\bf Other Feature Extraction Approaches:}
Clustering features in order to compress a vocabulary is only one
approach to lower dimensional feature extraction. There are of course
many classical approaches to feature extraction (see Chapter 15 of
\citep{mohri2018}), such as learning linear projections (e.g.,
Principle Component Analysis, Linear Discriminant Analysis) or
non-linear transformations (e.g., Locally Linear Embeddings, ISOMAP,
Laplacian Eigenmaps). However, these classical methods generally incur
more than quasilinear computational cost, for both learning and
the application the transformation, and are not feasible for our setting.

\subsection{Notation}

In the vocabulary compression problem we are given a correlated pair
of random variables $X$ (a categorical feature) and $C$ (a label), where $X\in
\{1,2,\dots,n\}$ and $C\in \{0,1\}$.  We aim to define a random
variable $Z\in \{1,2,\dots,m\}$ as a function of $X$ that maximizes
the mutual information with the label $C$, i.e.,  $I(Z;C)$, where for
general random variables $A$ and $B$ taking values in $\mathcal{A}$
and $\mathcal{B}$, respectively,
\begin{equation}
	I(A;B) = \sum_{A \in \mathcal{A}} \sum_{B \in \mathcal{B}} \prob{A,B} \log
	\Big(\frac{\prob{A, B}}{\prob{A}\prob{B}} \Big) \,.
\end{equation}
Note that $Z$ is a function of $X$ and hence we have $I(Z;C)\leq
I(X;C)$. If we let $m \geq n$, $Z=X$ maximizes the mutual
information $I(Z;C)$. We are interested in the
nontrivial case of $m \ll n$. Intuitively, we are compressing the
vocabulary of feature $X$ from size $n$ to a smaller size $m$, while
preserving the maximum amount of information about $C$.

\section{Algorithm and Analysis}
\label{sec:algorithms}
In this section, we first show how to transform the original objective
into a  set function and then prove that this set function is in
fact submodular. Next, we describe the components of a quasi-linear
and parallelizable algorithm to optimize the objective. Finally, we
consider a few simple intuitive baselines and show that they may
create features that fail to capture any mutual information with the
label.

\subsection{Objective Transformation}
Without loss of generality assume $\prob{C=0|X=i}$ for $i \in
\{1,\ldots,n\}$ is sorted in increasing order. Once the feature values
are sorted in this order, Lemma 3 of \citet{kurkoski2014} crucially
shows that in the optimum solution each value of $Z$ corresponds to a
consecutive subsequence of $\{1,\ldots,n\}$ --- this is a significant
insight that we take from the quantization for DMC literature. Thus,
we will cluster consecutive feature values into $m$ clusters, with
each cluster corresponding to a value in the compressed vocabulary of
$Z$.
Formally, define a function $F(S):
2^{\{1,\dots,n-1\}} \rightarrow \REAL$ as follows: Let
$S=\{s_1,\dots,s_{m-1}\}$, and assume $s_1<s_2<\dots<s_{m-1}$. For
simplicity, and without any loss in quality, we set $s_0=0$ and
$s_{m}=n$.  Let $Z$ be a random variable constructed from $X$ that has
value $i$, if and only if $s_{i-1} < X \leq s_{i}$. We define $F(S)=
I(Z;C)$.  Notice that we have 
\begin{equation*}
  \max_{S\subseteq\{2\dots n-1\} \colon |S| = m-1}F(S) = \max_{Z}I(Z;C) \,,
\end{equation*}
where $Z$ is a function of $X$ with vocabulary size $m$. The non-negativity of mutual information implies that the function $F(S)=I(Z;C)$ is always non-negative~\cite[p.~28]{Cover2006}. The monotonicity is equivalent to $I(Z_1; C)\le I(Z_2; C)$ for any $S_1\subseteq S_2\subseteq \{1,\dots,n-1\}$, where $Z_1$ and $Z_2$ are the random variables constructed from $S_1$ and $S_2$, respectively. Since $S_2$ represents a subdivision of $S_1$, $Z_1$ is a function of $Z_2$. By the data-processing inequality, we have $I(Z_1;C)\le I(Z_2;C)$~\cite[p.~34]{Cover2006}. In the
following section, we show that the function $F(S)$ is in fact
submodular.

\subsection{Submodularity of \texorpdfstring{$F(S)$}{F(S)}} \label{sec:Sub}

For a set $S\subseteq \{1,\dots,n-1\}$ and a number $s\in
\{1,\dots,n-1\}\setminus S$ we define $\Delta_s F(S) = F(S\cup \{s\})
- F(S)$. Let $s'$ be the item right before $s$ when we sort $S\cup
\{s\}$. Note that, the items that are mapped to $s'$ by $F(S)$ are
either mapped to $s'$ or $s$ by $F(S\cup \{s\})$. We first observe the
following useful technical lemma (the proof of all lemmas can be found
in the supplement).
\begin{lemma}
	\label{lem:delta_F}
Define the quantities $p=\prob{Z=s'}$, $q=\prob{Z=s}$,
$\x=\prob{C=0|Z=s'}$ and $\y=\prob{C=0|Z=s}$,
	then the following equality holds
	\begin{equation}
\Delta_s F(S) = {pf(\x)+ q f(\y) } - (p+q)f\big(\frac{p\x+q\y}{p+q}\big) \,,
\label{eq:delta1}
         \end{equation}
	 where $f(\cdot)$ the following convex function over $(0,1)$:
\begin{equation}
f(t) =  t\log\frac{t}{\prob{C=0}} +(1-t)\log\frac{1-t}{\prob{C=1}} \,.
\label{eq:f}
\end{equation}
\end{lemma}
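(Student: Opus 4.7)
The plan is to write $I(Z;C)$ as a sum of per-cluster contributions, observe that adding a single split-point $s$ to $S$ alters exactly one cluster, and read off the claimed identity.

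First I would derive a closed form for the contribution of a single cluster $z$ to $I(Z;C)$. Setting $p_z=\prob{Z=z}$ and $\alpha_z=\prob{C=0\mid Z=z}$, and expanding the two $c$-terms in the definition of $I(Z;C)$, the factor $p_z$ cancels inside each logarithm and the cluster's contribution simplifies to
\begin{equation*}
\sum_{c\in\{0,1\}} \prob{Z=z,C=c}\log\frac{\prob{Z=z,C=c}}{\prob{Z=z}\prob{C=c}} \;=\; p_z\, f(\alpha_z),
\end{equation*}
with $f$ as in~\eqref{eq:f}. Hence $F(S)=\sum_{z} p_z f(\alpha_z)$.

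Next I would compare $F(S)$ and $F(S\cup\{s\})$. Because each cluster is a consecutive block bounded by two adjacent split-points, inserting $s$ refines the partition at exactly one place: the cluster of $F(S)$ containing $s$ splits into the two new sub-clusters which the lemma indexes by $s'$ and $s$, while every other cluster is unchanged. Every unchanged cluster contributes the same $p_z f(\alpha_z)$ in both expressions, so these terms cancel in $\Delta_s F(S)$, and only the contribution of the affected cluster and its two children survives. By definition the two children have parameters $(p,\x)$ and $(q,\y)$, giving a combined contribution of $p f(\x)+q f(\y)$.

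Finally I would identify the parameters of the parent cluster: by additivity its probability is $p+q$, and the law of total probability applied to $\{C=0\}$ gives its conditional probability as $\frac{p\x+q\y}{p+q}$. Subtracting its contribution $(p+q)\,f\bigl(\frac{p\x+q\y}{p+q}\bigr)$ from $pf(\x)+qf(\y)$ yields~\eqref{eq:delta1}. The convexity claim for $f$ on $(0,1)$ is an immediate computation, since a direct differentiation gives $f''(t)=\frac{1}{t(1-t)}>0$. There is no serious obstacle; the only care needed is to verify the per-cluster decomposition of the mutual information and the fact that a single extra split-point touches exactly one cluster, both of which follow at once from the consecutive-block structure of the $Z$ construction inherited from Lemma~3 of~\citet{kurkoski2014}.
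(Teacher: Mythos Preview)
Your proposal is correct and follows essentially the same approach as the paper: rewrite $F(S)$ as $\sum_z p_z f(\alpha_z)$, observe that inserting $s$ changes only the single parent cluster that it splits, and identify that parent's parameters as $(p+q,\frac{p\x+q\y}{p+q})$ via additivity and the law of total probability. One minor remark: the fact that a single extra split-point touches exactly one cluster comes directly from the \emph{definition} of $F(S)$ (each cluster is a consecutive block between adjacent elements of $S$), not from Lemma~3 of \citet{kurkoski2014}, which is instead the reason this consecutive-block parametrization suffices for optimality.
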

Next, we provide several inequalities that precisely analyze expressions of the
same form as $\eqref{eq:delta1}$ with various values of $\x, \y, p$ and
$q$.
\begin{lemma}\label{lm:firstProperty}
	Pick $\x \leq \y \leq \z \in \REAL$, and $p\in [0,1]$. Let $q=1-p$ and let $f$ be an arbitrary convex function. We have
	\begin{equation*}
	 pf(\x) +q f(\y) - f(p\x+q\y) \leq  
		 pf(\x) +q f(\z) - f(p\x+q\z). 
	\end{equation*}
\end{lemma}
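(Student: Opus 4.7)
The plan is to cancel the common $pf(\x)$ terms from both sides and reinterpret the remaining inequality as a monotonicity statement about secant slopes of the convex function $f$. Concretely, the claim reduces to showing
\[
f(p\x + q\z) - f(p\x + q\y) \;\le\; q\bigl[f(\z) - f(\y)\bigr].
\]
If $q = 0$ or $\y = \z$ both sides vanish and the inequality is trivial, so I assume $q > 0$ and $\y < \z$ and divide through by the positive number $q(\z - \y)$. The left side then becomes the secant slope of $f$ across the interval $I_1 := [p\x + q\y,\; p\x + q\z]$, and the right side becomes the secant slope of $f$ across $I_2 := [\y,\; \z]$. So the task is to prove that the slope of $f$ on $I_1$ is no larger than its slope on $I_2$.

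Next, I would use the hypothesis $\x \le \y \le \z$ to locate $I_1$ weakly to the left of $I_2$ endpoint-by-endpoint: since $p \in [0,1]$ and $q = 1-p$,
\[
p\x + q\y \;\le\; p\y + q\y \;=\; \y
\qquad \text{and} \qquad
p\x + q\z \;\le\; p\z + q\z \;=\; \z,
\]
so the left endpoint of $I_1$ does not exceed the left endpoint of $I_2$, and likewise for the right endpoints. Both intervals have positive length, namely $q(\z-\y)$ and $\z-\y$.

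The core step is the standard fact from convex analysis that for any convex $f\colon\REAL\to\REAL$, the secant slope $\tfrac{f(v)-f(u)}{v-u}$ is nondecreasing separately in $u$ (with $v$ fixed) and in $v$ (with $u$ fixed); this follows immediately from the three-slopes inequality and requires no differentiability assumption on $f$. I would apply it in two stages: first slide the left endpoint from $p\x + q\y$ up to $\y$, then slide the right endpoint from $p\x + q\z$ up to $\z$, concluding that the slope over $I_1$ is at most the slope over $I_2$. Reversing the two divisions then yields the inequality displayed above, which combined with the cancelled $pf(\x)$ terms gives the lemma.

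The only nuisance will be that $I_1$ and $I_2$ may overlap rather than be disjoint, so the intermediate point used to chain the two slope comparisons must be chosen according to which of $p\x + q\z$ and $\y$ is larger. Both configurations, however, reduce to one or two invocations of the three-slopes inequality applied to three ordered points, so this is merely a short case split rather than a real obstacle.
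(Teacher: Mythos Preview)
Your proposal is correct and follows essentially the same route as the paper's proof: the paper also cancels $pf(\x)$, divides through by $q(\z-\y)$ to reduce the claim to the secant-slope inequality
\[
\frac{f(p\x+q\z)-f(p\x+q\y)}{q\z-q\y} \;\le\; \frac{f(\z)-f(\y)}{\z-\y},
\]
and then handles the same two cases (depending on whether $p\x+q\z \le \y$ or $\y < p\x+q\z$) by chaining two applications of the three-slopes inequality. Your framing in terms of ``sliding endpoints'' via the separate monotonicity of the secant slope in each argument is just a repackaging of the paper's Inequalities~\eqref{eq:convexDef2} and~\eqref{eq:convexDef3}; the underlying argument is the same.
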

Replacing $p$ and $q$ in Lemma \ref{lm:firstProperty} with
$\frac{p}{p+q}$ and $\frac{q}{p+q}$ and multiplying both sides by
$p+q$ implies the following corollary.
\begin{corollary}\label{cr:firstProperty1}
Pick $\x\leq \y\leq \z \in \REAL$, and $p,q\in \REAL^+$. Let $f$ be an arbitrary convex function. We have
	\begin{equation*}
	{pf(\x) +q f(\y) } - (p+q)f\big(\frac{p\x+q\y}{p+q}\big) \leq  
	{pf(\x) +q f(\z)}  - (p+q)f\big(\frac{p\x+q\z}{p+q}\big). 
	\end{equation*}
\end{corollary}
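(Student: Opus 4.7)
The plan is to follow the hint given immediately before the corollary: reduce to Lemma~\ref{lm:firstProperty} by a change of variables that normalizes the weights to sum to one. Concretely, given $p,q\in\REAL^+$, I would set
\[
p' = \frac{p}{p+q}, \qquad q' = \frac{q}{p+q},
\]
so that $p' \in [0,1]$ and $q' = 1 - p'$. Since the hypotheses $\x\leq\y\leq\z$ and convexity of $f$ are unchanged, Lemma~\ref{lm:firstProperty} applies and yields
\[
p' f(\x) + q' f(\y) - f(p'\x + q'\y) \;\leq\; p' f(\x) + q' f(\z) - f(p'\x + q'\z).
\]

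The next step is to multiply both sides by the positive scalar $p+q$ (which preserves the inequality). For the linear terms this produces $pf(\x) + qf(\y)$ on the left and $pf(\x) + qf(\z)$ on the right. For the remaining term, I would observe that
\[
p'\x + q'\y = \frac{p\x + q\y}{p+q},
\]
and similarly with $\z$ in place of $\y$; hence the subtracted terms become exactly $(p+q) f\!\left(\frac{p\x+q\y}{p+q}\right)$ and $(p+q) f\!\left(\frac{p\x+q\z}{p+q}\right)$. This is precisely the statement of the corollary.

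There is essentially no obstacle here: the corollary is a direct reweighting of Lemma~\ref{lm:firstProperty}, and the only thing to verify carefully is that the normalized weights $p',q'$ satisfy the hypotheses of the lemma (which they do, since $p,q>0$ implies $p'\in(0,1)$ and $p'+q'=1$) and that the algebraic identity $p'\x + q'\y = (p\x+q\y)/(p+q)$ matches the argument of $f$ on the right-hand side of the lemma. Both checks are immediate, so the proof is a one-line substitution.
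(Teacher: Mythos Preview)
Your proposal is correct and matches the paper's own justification essentially verbatim: the paper states that the corollary follows by replacing $p$ and $q$ in Lemma~\ref{lm:firstProperty} with $\frac{p}{p+q}$ and $\frac{q}{p+q}$ and then multiplying both sides by $p+q$, which is exactly the substitution and scaling you carry out.
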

Similarly, we have the following corollary (simply by looking at $f(-x)$ instead of $f(x)$). 
\begin{corollary}\label{cr:firstProperty2}
	Pick $\z \leq \x\leq \y \in \REAL$, and $p,q\in \REAL^+$. Let $f$ be an arbitrary convex function. We have
	\begin{equation*}
	{pf(\x) +q f(\y) } - (p+q)f\big(\frac{p\x+q\y}{p+q}\big)  \leq
	{pf(\z) +q f(\y)}  - (p+q)f\big(\frac{p\z+q\y}{p+q}\big). 
	\end{equation*}
\end{corollary}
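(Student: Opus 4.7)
The strategy is to deduce Corollary~\ref{cr:firstProperty2} from Corollary~\ref{cr:firstProperty1} by composing $f$ with a sign flip, exactly as the parenthetical remark in the text suggests. The key observation is that if $f$ is convex, so is $g(t) := f(-t)$, and negation reverses the chain $\z \leq \x \leq \y$ into $-\y \leq -\x \leq -\z$, which is precisely the ordered form required by Corollary~\ref{cr:firstProperty1}.

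Concretely, I would invoke Corollary~\ref{cr:firstProperty1} for the convex function $g$, with the triple $(-\y, -\x, -\z)$ playing the roles of $(\x, \y, \z)$ in that statement, and with the weights $(q, p)$ playing the roles of $(p, q)$. After the substitution, every occurrence of $g$ applied to a negative quantity becomes $f$ applied to the corresponding positive quantity; likewise, the affine combination inside $g$ produces a sign-flipped affine combination inside $f$, whose overall minus sign cancels with the outer negation in $g$. Unrolling these identities yields the inequality of Corollary~\ref{cr:firstProperty2} verbatim.

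The only step that requires genuine care is the swap of $p$ and $q$: in Corollary~\ref{cr:firstProperty1} the ``fixed'' argument sits next to weight $p$ in the smallest slot, whereas in Corollary~\ref{cr:firstProperty2} it sits next to weight $q$ in the largest slot, so the reflection forces the two weights to trade roles. This is pure bookkeeping rather than a real obstacle, and once the substitution is written out the corollary follows in a single line --- which is precisely why the text is content to state it as a one-sentence consequence of its predecessor.
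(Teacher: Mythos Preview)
Your proposal is correct and follows precisely the approach indicated by the paper's parenthetical hint (``simply by looking at $f(-x)$ instead of $f(x)$''): you apply Corollary~\ref{cr:firstProperty1} to the convex function $g(t)=f(-t)$ with the reflected triple $(-\y,-\x,-\z)$ and swapped weights $(q,p)$, and the substitution unwinds to Corollary~\ref{cr:firstProperty2} exactly. The weight swap you flag is indeed the only bookkeeping subtlety, and you handle it correctly.
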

We require one final lemma before proceeding to the main theorem.
\begin{lemma}\label{lm:secondProperty}
	Pick $\x, \y \in \REAL$, and $p,q,q'\in (0,1]$ such that $q<q'$. Let $f$ be an arbitrary convex function. We have
	\begin{equation*}
	pf(\x) +q f(\y)  - (p+q)f\big(\frac{p\x+q\y}{p+q}\big) \leq  
	pf(\x) +q' f(\y)  - (p+q')f\big(\frac{p\x+q'\y}{p+q'}\big). 
	\end{equation*}
\end{lemma}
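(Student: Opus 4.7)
The plan is to reduce the inequality to a single application of the convexity of $f$, by viewing the Jensen point on the right-hand side as a convex combination built from the Jensen point on the left-hand side and the endpoint $\beta$. First I would subtract $p\,f(\alpha)$ from both sides of the claim and set $r := q' - q > 0$; this reformulates the lemma as the equivalent inequality
\[
(p+q')\, f\!\left(\frac{p\alpha+q'\beta}{p+q'}\right) - (p+q)\, f\!\left(\frac{p\alpha+q\beta}{p+q}\right) \;\le\; r\, f(\beta).
\]

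The key algebraic observation is the identity
\[
\frac{p\alpha+q'\beta}{p+q'} \;=\; \frac{p+q}{p+q+r}\cdot\frac{p\alpha+q\beta}{p+q} \;+\; \frac{r}{p+q+r}\cdot\beta,
\]
which exhibits the left-hand argument as a genuine convex combination of the point $\tfrac{p\alpha+q\beta}{p+q}$ and the endpoint $\beta$: the two weights are non-negative since $p,q,r>0$, and since $p+q+r=p+q'$ they sum to one. Applying convexity of $f$ to this combination and then multiplying through by $p+q' = p+q+r$ yields exactly the reformulated inequality above. Undoing the initial subtraction recovers the statement of the lemma.

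The main --- and essentially only --- difficulty is spotting the correct convex combination. Unlike in Lemma~\ref{lm:firstProperty} and its corollaries, the parameter being varied here is a probability mass ($q$) rather than one of the arguments of $f$, so the monotonicity is obtained by sliding the Jensen point along the segment toward $\beta$ as more weight is placed on $\beta$, rather than by comparing two Jensen gaps at different endpoints. Once the identity above is in hand, the proof is a one-line invocation of convexity and requires no additional machinery.
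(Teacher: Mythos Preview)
Your proof is correct and is essentially identical to the paper's: both exhibit $\frac{p\alpha+q'\beta}{p+q'}$ as the convex combination $\frac{p+q}{p+q'}\cdot\frac{p\alpha+q\beta}{p+q}+\frac{q'-q}{p+q'}\cdot\beta$, apply convexity of $f$, multiply through by $p+q'$, and rearrange. The only cosmetic difference is that you introduce $r=q'-q$ and subtract $p f(\alpha)$ up front, whereas the paper keeps $q'-q$ explicit and adds $p f(\alpha)$ at the end.
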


\begin{theorem}[Submodularity]\label{thm:submodular}
	For any pair of sets $S_1\subseteq S_2 \subseteq
	\{1,\dots,n-1\}$ and any $s\in \{1,\dots,n-1\}\setminus S_2$ we have
	$$\Delta_s F(S_1) \geq \Delta_s F(S_2).$$
\end{theorem}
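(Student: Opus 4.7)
The plan is to reduce the submodular inequality to a chain of one-coordinate comparisons, each handled by one of the preceding lemmas. By Lemma~\ref{lem:delta_F}, the marginal $\Delta_s F(S)$ depends only on the cluster of $S$ that actually gets split by~$s$, since inserting $s$ touches no other cluster. So for each $i\in\{1,2\}$ I will let $\ell_i<s<r_i$ be the predecessor and successor of~$s$ in $S_i\cup\{s\}$ (with the boundary convention $\ell_i=0$ or $r_i=n$), set $B_i^L=(\ell_i,s]$ and $B_i^R=(s,r_i]$, and define
\[
p_i=\prob{X\in B_i^L},\ q_i=\prob{X\in B_i^R},\ \alpha_i=\prob{C=0|X\in B_i^L},\ \beta_i=\prob{C=0|X\in B_i^R}.
\]
Writing $G(\alpha,\beta,p,q)=pf(\alpha)+qf(\beta)-(p+q)f\!\left(\tfrac{p\alpha+q\beta}{p+q}\right)$, Lemma~\ref{lem:delta_F} gives $\Delta_s F(S_i)=G(\alpha_i,\beta_i,p_i,q_i)$, and the goal reduces to $G(\alpha_1,\beta_1,p_1,q_1)\ge G(\alpha_2,\beta_2,p_2,q_2)$.

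Next I will establish the structural comparisons that drive the argument. Because $S_1\subseteq S_2$, inserting $s$ into $S_1$ splits a cluster that contains the one split in $S_2$, so $B_2^L\subseteq B_1^L$ and $B_2^R\subseteq B_1^R$; in particular $p_2\le p_1$ and $q_2\le q_1$. Since the feature values are pre-sorted in increasing order of $\prob{C=0|X=i}$, every item of $B_1^L\setminus B_2^L$ lies strictly to the left of $B_2^L$ and thus has smaller per-item conditional probability; augmenting $B_2^L$ by such items can only lower the weighted average, giving $\alpha_1\le\alpha_2$. Symmetrically $\beta_2\le\beta_1$; and since every index in $B_2^L$ is below $s$ while every index in $B_2^R$ is above $s$, the same sort yields $\alpha_2\le\beta_2$. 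Altogether, $\alpha_1\le\alpha_2\le\beta_2\le\beta_1$.

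The last step is to morph $G(\alpha_2,\beta_2,p_2,q_2)$ into $G(\alpha_1,\beta_1,p_1,q_1)$ one coordinate at a time. Corollary~\ref{cr:firstProperty1} with $\alpha_2\le\beta_2\le\beta_1$ lifts the second argument: $G(\alpha_2,\beta_2,p_2,q_2)\le G(\alpha_2,\beta_1,p_2,q_2)$; Corollary~\ref{cr:firstProperty2} with $\alpha_1\le\alpha_2\le\beta_1$ lowers the first: $G(\alpha_2,\beta_1,p_2,q_2)\le G(\alpha_1,\beta_1,p_2,q_2)$; Lemma~\ref{lm:secondProperty} applied first to raise $q_2$ to $q_1$ and then, by the symmetric form (swapping the roles of $(p,\alpha)$ and $(q,\beta)$), to raise $p_2$ to $p_1$, gives $G(\alpha_1,\beta_1,p_2,q_2)\le G(\alpha_1,\beta_1,p_1,q_1)$. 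Chaining these four inequalities produces $\Delta_s F(S_2)\le\Delta_s F(S_1)$, as required. Convexity of~$f$ in~\eqref{eq:f}, needed by each lemma, is immediate because $f$ differs from the convex map $t\mapsto t\log t+(1-t)\log(1-t)$ by a linear function.

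The step I expect to be the main obstacle is the mean-comparison argument for $\alpha_1\le\alpha_2\le\beta_2\le\beta_1$: this is the only place where the initial sort by $\prob{C=0|X=i}$ is actually used, and it must be phrased as a statement about probability-weighted averages rather than pointwise values (invoking the elementary fact that augmenting a weighted sample by uniformly smaller values lowers its mean). Once this and the obvious $p_2\le p_1$, $q_2\le q_1$ are in hand, each of the four preceding lemmas advances exactly one coordinate of $G$ while fixing the others, and they compose mechanically.
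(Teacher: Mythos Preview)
Your proposal is correct and follows essentially the same route as the paper: express $\Delta_s F(S_i)$ via Lemma~\ref{lem:delta_F} as $G(\alpha_i,\beta_i,p_i,q_i)$, derive the order relations $p_2\le p_1$, $q_2\le q_1$, $\alpha_1\le\alpha_2\le\beta_2\le\beta_1$ from $S_1\subseteq S_2$ and the pre-sort, and then chain Corollary~\ref{cr:firstProperty1}, Corollary~\ref{cr:firstProperty2}, and two applications of Lemma~\ref{lm:secondProperty} (one via the $(p,\alpha)\leftrightarrow(q,\beta)$ symmetry of $G$) to move one coordinate at a time. The only cosmetic differences are the order in which you increase $p$ and $q$ and your more explicit phrasing of the weighted-average argument for $\alpha_1\le\alpha_2$ and $\beta_2\le\beta_1$.
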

\begin{proof}
\begin{figure}
	\centering
	\includegraphics[width=0.6\textwidth]{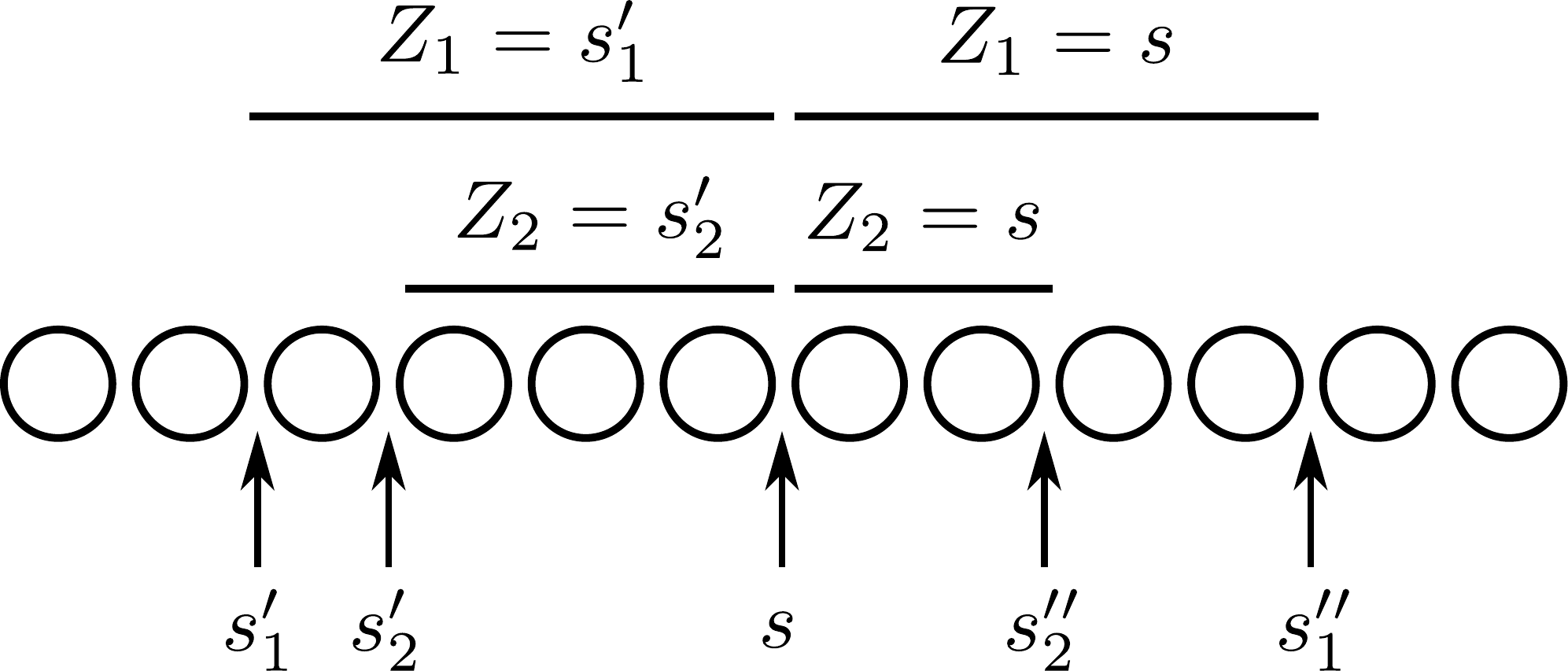}
	\caption{Illustration of boundaries used in proof of
	Theorem~\ref{thm:submodular}.}
\label{fig:Sorder}
\end{figure}
Let $s'_1$ and $s''_1$ be the items right before and right after $s$
when we sort $S_1\cup \{s\}$. Also, let $Z_1$ and $Z'_1$ be
the random variables corresponding to $F(S_1\cup \{s\})$ and
$F(S_1)$ respectively.
Similarly let $s'_2$ and $s''_2$ be items right before and right after
$s$ when we sort $S_2\cup \{s\}$, and let $Z_2$ and $Z'_2$ be
the random variables corresponding to $F(S_2\cup \{s\})$ and
$F(S_2)$ respectively.

Let us set $p_1=\prob{Z_1=s'_1}$, $q_1=\prob{Z_1=s}$,
$\x_1=\prob{C=0|Z_1=s'_1}$ and $\y_1=\prob{C=0|Z_1=s}$.
Similarly let us set $p_2=\prob{Z_2=s'_2}$,
$q_2=\prob{Z_2=s}$, $\x_2=\prob{C=0|Z_2=s'_2}$ and
$\y_2=\prob{C=0|Z_2=s}$.  Note that since $S_1\subseteq S_2$,
we have $s'_1,s''_1 \in S_2$ and hence we have $s'_2\geq s'_1$
and $s''_2\leq s''_1$ (see Figure~\ref{fig:Sorder}).
Therefore, we have following set of inequalities
\begin{align} 
p_2=\prob{Z_2=s'_2} &\leq \prob{Z_1=s'_1} = p_1 \,, \label{eq:p1,p2}\\
q_2=\prob{Z_2=s} &\leq \prob{Z_1=s} =q_1 \,. \label{eq:q1,q2}
\end{align}
Since in the definition of $F(\cdot)$ the elements are ordered
by $\prob{C=0|X=x}$, we have the following set of inequalities
\begin{align}
\x_1=\prob{C=0|Z_1=s'_1} &\leq \prob{C=0|Z_1=s} = \y_1 \,, \label{eq:x1,y1}
\\ 
\x_2=\prob{C=0|Z_2=s'_2} &\leq \prob{C=0|Z_2=s} = \y_2 \,, \label{eq:x2,y2}
\\ 
\x_1=\prob{C=0|Z_1=s'_1} &\leq \prob{C=0|Z_2=s'_2} = \x_2 \,, \label{eq:x1,x2}
\\ 
\y_2=\prob{C=0|Z_2=s} &\leq \prob{C=0|Z_1=s} = \y_1 \,. \label{eq:y1,y2}
\end{align}
Finally, we have 
\begin{align*}
& \Delta_s F(S_2) \\
 \overset{(a)}{=}{}& 	p_2f(\x_2) +q_2 f(\y_2)  -
	(p_2+q_2)f\big(\frac{p_2\x_2+q_2\y_2}{p_2+q_2}\big) \\
 \overset{(b)}{\leq}{}& p_2f(\x_2) +q_2 f(\y_1) -
	(p_2+q_2)f\big(\frac{p_2\x_2+q_2\y_1}{p_2+q_2}\big)  \\
 \overset{(c)}{\leq}{}& p_2f(\x_1) +q_2 f(\y_1) -
	(p_2+q_2)f\big(\frac{p_2\x_1+q_2\y_1}{p_2+q_2}\big) \\
 \overset{(d)}{\leq}{}& p_1f(\x_1) +q_2 f(\y_1) -
	(p_1+q_2)f\big(\frac{p_1\x_1+q_2\y_1}{p_1+q_2}\big)\\
 \overset{(e)}{\leq}{}& p_1f(\x_1) +q_1 f(\y_1) -
	(p_1+q_1)f\big(\frac{p_1\x_1+q_1\y_1}{p_1+q_1}\big) \\
 \overset{(f)}{=}{}& \Delta_s F(S_1) \,,
\end{align*}
where $(a)$ and $(f)$ follow from equality~\ref{eq:delta1}, $(b)$
follows from Corollary~\ref{cr:firstProperty1} and inequalities
\eqref{eq:y1,y2} and \eqref{eq:x2,y2}, $(c)$ follows from
Corollary~\ref{cr:firstProperty2} and inequalities \eqref{eq:x1,x2} and
\eqref{eq:x1,y1}, $(d)$ follows from Lemma~\ref{lm:secondProperty} and
inequality \eqref{eq:p1,p2}, and $(e)$ follows from
Lemma~\ref{lm:secondProperty} and inequality \eqref{eq:q1,q2}.
This completes the proof.
\end{proof}

\subsection{Submodular Optimization Algorithms}\label{sec:Alg}
Given that we have shown $F(\cdot)$ is submodular, we now show two
approaches to optimization: a single machine algorithm
that runs in time $O(n \log n)$ as well as an algorithm  which allows
the input to be processed in a distributed fashion, at the cost of an
additional logarithmic factor in running time.

{\bf Single Machine Algorithm:}
We will make use of
a $1 - 1/e - \epsilon$ approximation algorithm for submodular
maximization that makes only $O(n)$ queries to $\Delta_s F(S)$
\citep{mirzasoleiman2015}. First, fix an arbitrary small constant
$\epsilon$ (this
appears as a loss in the approximation factor as well as in the
running time). The algorithm starts with an empty solution set and
then proceeds in $m$ iterations where,
 in each iteration, we sample a set of
$\frac {n \log { 1/  \epsilon}}{m}$ elements uniformly at random from
the elements that have not been added to the solution so far and then add the
sampled element with maximum marginal increase to the solution.

In general, we may expect that computing $\Delta_s F(S)$ requires at
least $\Omega(|S|)$ time, which might be as large as $m$. However, we note
that the algorithm of \citep{mirzasoleiman2015} (similar to most
other algorithms for submodular maximization) only queries $\Delta_s
F(S)$ for incrementally growing subsets of the final
solution $S$. In that case, we can compute each incremental value of
$\Delta_s F(S)$ in logarithmic time using a data structure that costs
$O(n\log n )$ time to construct (see Algorithm~~\ref{alg:query}). By using
this query oracle, we do not require communicating the whole set $S$
for every query. Moreover, we use a red-black tree to maintain $S$,
and hence we can search for neighbors ($s'$ and $s''$) in logarithmic time.
Thus, combining the submodular maximization algorithm that requires
only a linear number of queries with the logarithmic time query oracle
implies the following theorem.
\begin{theorem}
For any arbitrary small $\epsilon > 0$, there exists a $(1 - 1/e - \epsilon)$-approximation algorithm for vocabulary
compression that runs in $O(n\log n)$ time.
\end{theorem}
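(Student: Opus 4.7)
My plan is to combine the stochastic greedy algorithm of \citet{mirzasoleiman2015} with a fast oracle for $\Delta_s F(S)$ and the appropriate preprocessing. The building blocks are: (i) sorting $\{1,\dots,n\}$ by $\prob{C=0\mid X=i}$; (ii) the monotonicity and submodularity of $F$ established earlier; (iii) a data structure that answers marginal queries in $O(\log n)$ amortized time; and (iv) the stochastic greedy algorithm, which achieves a $(1-1/e-\epsilon)$ approximation using only $O(n\log(1/\epsilon))$ oracle calls on a non-negative monotone submodular function.

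First I would sort the $n$ feature values by $\prob{C=0\mid X=i}$ in $O(n\log n)$ time; this is the ordering under which the transformed objective $F(S)$ is well-defined. Then I would construct the query data structure described around Algorithm~\ref{alg:query}. By Lemma~\ref{lem:delta_F}, one marginal $\Delta_s F(S)$ reduces to the four quantities $p,q,\alpha,\beta$ associated with the previous boundary $s'$ and the candidate $s$ (with the right neighbor $s''$ also needed to delimit the cluster that is being split). Each of these is a linear functional of the two sequences $\prob{X=i}$ and $\prob{X=i,C=0}$ restricted to a consecutive index range; precomputing prefix sums of both sequences in $O(n)$ time after sorting allows any such range sum to be evaluated in $O(1)$. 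Maintaining the current set $S$ in a red--black tree then permits locating $s'$ and $s''$ in $O(\log n)$ per query, which dominates the oracle cost.

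Plugging this oracle into the stochastic greedy procedure, at each of the $m$ iterations we draw $\frac{n\log(1/\epsilon)}{m}$ candidates uniformly at random from the remaining elements and select the one with largest $\Delta_s F(S)$; by \citet{mirzasoleiman2015}, since $F$ is non-negative (mutual information is non-negative), monotone (data-processing inequality applied to the refinement ordering, as argued in the paragraph preceding Section~\ref{sec:Sub}), and submodular (Theorem~\ref{thm:submodular}), this returns a solution of expected value at least $(1-1/e-\epsilon)\max_{|S|=m-1}F(S)$ using $O(n\log(1/\epsilon))$ total oracle calls. Charging $O(\log n)$ per call and adding the $O(n\log n)$ preprocessing gives total running time $O(n\log n + n\log(1/\epsilon)\log n) = O(n\log n)$ for any fixed $\epsilon>0$.

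The subtlety I expect to spend the most care on is justifying that the oracle really achieves amortized $O(\log n)$ per query even though $S$ grows throughout the algorithm: each insertion of a greedy winner $s$ splits exactly one existing cluster into two and leaves every other cluster untouched, so the red--black tree update and any cached aggregate associated with the two affected clusters can be refreshed in $O(\log n)$ from the prefix-sum arrays. Crucially, the oracle returns the exact value of $\Delta_s F(S)$ at each call, so the approximation guarantee of \citet{mirzasoleiman2015} transfers verbatim, completing the proof.
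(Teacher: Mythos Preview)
Your proposal is correct and follows essentially the same approach as the paper: combine the stochastic greedy algorithm of \citet{mirzasoleiman2015} (which needs only $O(n\log(1/\epsilon))$ marginal-gain queries and achieves a $(1-1/e-\epsilon)$ guarantee for non-negative monotone submodular functions) with the prefix-sum plus red--black-tree oracle of Algorithm~\ref{alg:query} to answer each $\Delta_s F(S)$ query in $O(\log n)$ time after $O(n\log n)$ preprocessing. Your write-up is in fact slightly more detailed than the paper's own argument, explicitly spelling out the sorting step, the role of prefix sums for the four quantities in Lemma~\ref{lem:delta_F}, and the amortized update when a new boundary is inserted.
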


\begin{algorithm}[t]
	\noindent
	\textbf{Procedure: Initialization}\\
	\noindent
	\textbf{Input:} Sorted list of probabilities $\prob{C=0|X=x_i}$ and probabilities $\prob{X=x_i}$.
	\begin{algorithmic}[1]
		\STATE Initiate a red-black tree data structure $S$.
		\STATE Insert $0$ and $n$ into $S$.
		\STATE $p_{<0}\leftarrow 0$
		\STATE $p_{C=0|<0}\leftarrow 0$
		\FOR {$i=1$ to $n$}
			\STATE $p_{<i}\leftarrow p_{<i-1}+\prob{x=x_i}$
			\STATE $p_{C=0|<i}\leftarrow \frac{p_{C=0|<i-1} \times p_{<i-1} + \prob{C=0|X=x_i} \times \prob{x=x_i}}{p_{<i}}$.
		\ENDFOR
	\end{algorithmic}
\noindent
	 \textbf{Procedure: Query $\Delta_s F(S)$}\\
\noindent
	\textbf{Input:} A number $s \in	\{1,\dots,n-1\}\setminus S$	
	\begin{algorithmic}[1]
		\STATE $s'\leftarrow$ largest element smaller than $s$ in $S$.
		\STATE $s''\leftarrow$ smallest element larger than $s$ in $S$.
		\STATE $p=p_{<s} - p_{<s'}$
		\STATE $q=p_{<s''} - p_{<s}$
		\STATE $\x=\frac{p_{C=0|<s}\times p_{<s} - p_{C=0|<s'}\times p_{<s'}}{p}$ 
		\STATE $\y=\frac{p_{C=0|<s''}\times p_{<s''} - p_{C=0|<s}\times p_{<s}}{q}$ 
		\STATE Return ${pf(\x) +q f(\y) } -
		(p+q)f\big(\frac{p\x+q\y}{p+q}\big)$ where $f(\cdot)$
		is defined by equation~\eqref{eq:f}.
	\end{algorithmic}

	\noindent
	 \textbf{Procedure: Insert $s$ to $S$}\\
	\noindent
	\textbf{Input:} A number $s  \in  \{1,\dots,n-1\}\setminus S$	
	\begin{algorithmic}[1]
		\STATE Insert $s$ into $S$. \\
	\end{algorithmic}
	
	\caption{Data structure  to compute $\Delta_s F(S)$}
	\label{alg:query}
\end{algorithm}

{\bf Distributed Algorithm:} Again, fix an arbitrary small number
$\epsilon>0$ (for simplicity assume $\epsilon k$ and
$\frac{n}{\epsilon k}$ are integers). In this distributed
implementation we use $\epsilon k$ machines, requires
$O(\frac{n}{\epsilon k})$ space per machine, and uses a
logarithmic number of rounds of computation.

To define our distributed algorithm we start with the
(non-distributed) submodular
maximization algorithm of~\citep{badanidiyuru2014fast}, which
provides a $1-1/e-\epsilon$ approximate solution using $O(n \log n)$
queries to the submodular function oracle.
The algorithm works by defining a decreasing sequence of thresholds
$w_0,w_1,\dots,w_{\log_{\frac{1}{1-\epsilon}}(n)}$, where $w_0$ is the
maximum marginal increase of a single element, and $w_i=(1-\epsilon)^i w_0$.
The algorithm proceeds in $\log_{\frac{1}{1-\epsilon}}(n)$ rounds,
where in round $i$ the algorithm iterates over all elements and
inserts an element $s$ into the solution set $S$ if $\Delta_s F(S)\geq
w_i$. The algorithm stops once it has selected $k$ elements or if it
finishes $\log_{\frac{1}{1-\epsilon}}(n)$ rounds, whichever comes
first.
As usual, this algorithm
only queries $\Delta_s F(S)$ for incrementally growing subsets of the
final solution $S$, and hence we can use Algorithm~\ref{alg:query} to
solve vocabulary compression in $O(n \log^2 n)$ time.

Now, we show how to distribute this computation across multiple
machines. First, for all $j\in \{1,\dots,\epsilon k-1\}$, we select
the $(j\frac{n}{\epsilon k})$-th element and add it to the solution set
$S$. This decomposes the elements into $\epsilon k$ continuous subsets
of elements, each of size $\frac{n}{\epsilon k}$, and each of which we
assign to one machine. Note that  $\Delta_s F(S)$ only depends on the
previous item and next item of $s$ in $S$ and, due to the way that we
created the initial solution set $S$ and decomposed the input
elements, the previous item and next item of $s$ are always both
assigned to the same machine as $s$. Hence each machine can compute
$\Delta_s F(S)$ locally. However, we assigned the first $\epsilon k
-1$ to the solution set blindly and their  marginal gain may be very
small. Intuitively, we are potentially throwing away some of our
selection budget for the ease of computation. Next we show that by
forcing these elements into the solution we do not lose more than
$\epsilon$ on the approximation factor.

First of all, notice that if we force a subset of the element to be
included in the solution, the objective function is still submodular
over the remaining elements. That is, the marginal impact of an element
$s$ (i.e., $\Delta_s F(S)$) shrinks as $S$ grows. Next we show that
if we force $\epsilon k -1$ elements into the solution, it does not
decrease the value of the optimum solution by more than a
$(1-\epsilon)$ factor. This means that if we provide a
$(1-1/e-\epsilon)$-approximation to the new optimum solution, it is a
$(1-\epsilon)\times(1-1/e-\epsilon) \leq (1-1/e-2\epsilon)$
approximate solution to the original optimum.

Let $S^*$ be a solution of size $k$ that maximizes $F(\cdot)$.
Decompose $S^*$ into $\frac 1 {\epsilon}$ subsets of size $\epsilon
k$. Note that by submodularity the value of $F(S^*)$ is more than
the
sum of the marginal impacts of each $\frac{1}{\epsilon}$ subset
(given the remainder of the subsets).
Therefore, by the pigeonhole principle, the marginal impact of one of
these subsets of $S^*$ is at most $\epsilon F(S^*)$. If we remove this
subset from $S^*$ and add the $\epsilon k -1$ forced elements, we find
a solution of size (at most) $k$ that contains all of the forced
elements and has value at least $(1-\epsilon)F(S^*)$ as desired.
Hence, by forcing these initial $\epsilon k -1$ elements to be in the
solution we lose only an $\epsilon$ fraction on the approximation factor.

Now, to implement the algorithm of~\citep{badanidiyuru2014fast}, in iteration $i$, each machine independently finds and inserts all of its elements with marginal increase more than $w_i$. If the number of selected elements exceeds $k$, we remove the last few elements to have exactly $k$ elements in the solution. This implies the following theorem.

\begin{theorem}
	For any arbitrary small $\epsilon > 0$, there exists a $(1 - 1/e - \epsilon)$-approximation $(\log n)$-round distributed algorithm for vocabulary
	compression with $O(\frac n k)$ space per machine and $O(n\log^2 n)$ total work.
\end{theorem}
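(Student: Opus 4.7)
The plan is to build the distributed algorithm on top of the threshold-based greedy of Badanidiyuru--Vondr\'{a}k, which already yields a $(1-1/e-\epsilon)$ guarantee using $O(n\log n)$ oracle calls arranged in $O(\log_{1/(1-\epsilon)} n)=O(\log n)$ threshold rounds. The key conceptual move is to ``pre-commit'' the $\epsilon k-1$ elements at positions $jn/(\epsilon k)$ for $j=1,\dots,\epsilon k-1$ to the solution set $S$ before the greedy begins. This pre-commit simultaneously (i)~partitions $\{1,\dots,n\}$ into $\epsilon k$ contiguous blocks of size $n/(\epsilon k)$ that we hand to $\epsilon k$ machines, and (ii)~guarantees that for any candidate element $s$ inside a block, both of its $S$-neighbors $s'$ and $s''$ used by the oracle of Algorithm~\ref{alg:query} already lie in that same block. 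Hence each machine can evaluate $\Delta_s F(S)$ using only its local $O(n/(\epsilon k))$ slice of precomputed prefix sums, with no cross-machine communication during a threshold round.

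Next I would quantify the approximation loss introduced by forcing. Let $S^*$ be any optimum solution of size $k$ and partition it arbitrarily into $1/\epsilon$ groups of size $\epsilon k$. Submodularity of $F$ (Theorem~\ref{thm:submodular}) implies that $F(S^*)$ is at most the sum of marginal gains of these groups, so by pigeonhole some group contributes at most $\epsilon F(S^*)$. Deleting that group from $S^*$ and inserting the $\epsilon k-1$ forced elements yields a feasible solution of size at most $k$ that (a)~contains all forced elements and (b)~has value at least $(1-\epsilon)F(S^*)$ by monotonicity. Running the $(1-1/e-\epsilon)$-approximation on top of the forced base therefore returns a solution of value at least $(1-\epsilon)(1-1/e-\epsilon)F(S^*)\ge (1-1/e-2\epsilon)F(S^*)$, and rescaling $\epsilon$ by a constant recovers the claimed guarantee. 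I would also note that submodularity is preserved after conditioning on the pre-committed prefix, so the greedy still enjoys its original guarantee on the residual problem.

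I would then describe the per-round simulation. In round $i$ of the threshold algorithm, each machine independently scans its block and, maintaining a local red-black tree of selected elements, inserts every $s$ for which its local marginal satisfies $\Delta_s F(S)\ge w_i=(1-\epsilon)^i w_0$. Because marginals depend only on immediate neighbors in $S$, the global state is simply the concatenation of the local states and no synchronization is needed \emph{within} a round; between rounds only the running cardinality of $S$ (to detect $|S|=k$) and the initial $w_0$ must be aggregated. If the cardinality overshoots $k$ in the final round, a master trims the smallest-marginal additions. The total number of rounds is $O(\log n)$, each round performs $O(n/(\epsilon k))$ local queries per machine at $O(\log n)$ cost via Algorithm~\ref{alg:query}, so the total work is $O(\epsilon k\cdot (n/(\epsilon k))\cdot \log n\cdot \log n)=O(n\log^2 n)$ and the per-machine space is $O(n/(\epsilon k))=O(n/k)$ for fixed $\epsilon$.

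The main obstacle is the locality argument married to the $(1-\epsilon)$ loss bound: one must verify that the oracle formula in Lemma~\ref{lem:delta_F} really is determined solely by the two neighbors of $s$ in $S$ (so the machines are genuinely independent across a round), and that the pigeonhole swap above produces a \emph{feasible} size-$k$ solution containing the forced skeleton without double-counting. Once these two points are in hand, the round count, work bound, and space bound follow mechanically from the guarantees of the underlying submodular maximization procedure and the query data structure.
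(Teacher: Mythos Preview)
Your proposal is correct and follows essentially the same approach as the paper: pre-commit the $\epsilon k-1$ equally spaced boundary elements to localize the oracle of Algorithm~\ref{alg:query} onto $\epsilon k$ machines, bound the resulting loss via the pigeonhole-on-$S^*$ argument, and then simulate the Badanidiyuru--Vondr\'{a}k threshold greedy independently on each block. The only minor difference is that you make the work and space accounting more explicit and state the inequality $(1-\epsilon)(1-1/e-\epsilon)\ge 1-1/e-2\epsilon$ in the correct direction.
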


\subsection{Heuristic Algorithms}
In this subsection we review a couple of heuristics that can serve as
simple alternatives to the algorithm we suggest and show that they
can, in fact, fail entirely for some inputs. We also provide an
empirical comparison to these, as well as the algorithm of
\citet{dhillon2003}, in Section~\ref{sec:empirical}.

\paragraph{Bucketing Algorithm:} This algorithm splits the range of
probabilities $[0,1]$ into $k$ equal size intervals $[0,1/k),
[1/k,2/k), \dots, [(k-1)/k,1]$. Then it uses these intervals (or
buckets) to form the compressed vocabulary. Specifically, each
interval represents all elements $i$ such that $\prob{C=0|X=i} \in
[(j-1)/k,j/k)$.  Note that there exists a set $S^b$ that such that
$F(S^b)$ correspond to the mutual information of the outcome of the
bucketing algorithm and the labels. First we show that it is possible to
give an upper bound on the mutual information loss, i.e., $I(X;C)-F(S^b)$.

\begin{theorem}
	Let $Z$ be the random variable provided by the bucketing algorithm. The total mutual information loss of the bucketing algorithm is bounded as follows.
	\begin{align*}
	I(X;C) - I(Z;C) \leq \Delta_{\max},
	\end{align*}
	where $\Delta_{\max} = \max_{j}\big(\max_{r\in [(j-1)/k,j/k)}
	f(r) - \min_{r\in [\frac{j-1}{k},\frac{j}{k})} f(r)\big)$ and $f(\cdot)$ is
	defined in equation~\eqref{eq:f}.
\end{theorem}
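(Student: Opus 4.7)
The plan is to rewrite both mutual informations as weighted expectations of the convex function $f$ from equation~\eqref{eq:f}, and to exploit the fact that bucketing replaces each posterior $\alpha_i := \prob{C=0|X=i}$ by the bucket average $\beta_j := \prob{C=0|Z=j}$, which lies in the same interval $[(j-1)/k, j/k)$ as the $\alpha_i$'s collapsed into it. The loss inside each bucket is then bounded by the oscillation of $f$ on that interval, and summing over buckets gives the $\Delta_{\max}$ bound.

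Concretely, the first step is to unfold the definition of mutual information to obtain
\begin{equation*}
I(X;C) = \sum_{i=1}^n \prob{X=i}\, f(\alpha_i), \qquad I(Z;C) = \sum_{j=1}^{k} \prob{Z=j}\, f(\beta_j),
\end{equation*}
an identity already implicit in the derivation of Lemma~\ref{lem:delta_F}. Letting $B_j = \{i : \alpha_i \in [(j-1)/k, j/k)\}$ denote the $j$-th bucket, I note that $\sum_{i \in B_j} \prob{X=i} = \prob{Z=j}$ and that $\beta_j$ is a convex combination of the $\alpha_i$'s with $i \in B_j$, so $\beta_j \in [(j-1)/k, j/k)$ as well. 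Splitting the loss bucket-by-bucket gives
\begin{equation*}
I(X;C) - I(Z;C) = \sum_{j=1}^{k} \Bigl[ \sum_{i \in B_j} \prob{X=i}\, f(\alpha_i) - \prob{Z=j}\, f(\beta_j) \Bigr].
\end{equation*}
Since both $\alpha_i$ (for $i \in B_j$) and $\beta_j$ lie in $[(j-1)/k, j/k)$, the bracketed term is at most $\prob{Z=j}\,\Delta_j$, where $\Delta_j := \max_{r \in [(j-1)/k, j/k)} f(r) - \min_{r \in [(j-1)/k, j/k)} f(r)$. Bounding $\Delta_j \leq \Delta_{\max}$ and using $\sum_j \prob{Z=j} = 1$ yields the claim.

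There is no real obstacle to carrying out this proof: it is essentially bookkeeping once the expectation-of-$f$ rewriting is in hand, and that rewriting is exactly the identity that underlies equation~\eqref{eq:delta1}. The only mild care-point is to verify that $\beta_j$ sits in the same half-open interval as the $\alpha_i$'s feeding into it, which is immediate from convexity of the interval. Notably, this bound does not require convexity of $f$, only that $f$ is bounded on each bucket interval so that $\Delta_j$ is finite.
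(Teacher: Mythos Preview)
Your proof is correct and follows essentially the same route as the paper: both rewrite $I(X;C)$ and $I(Z;C)$ as $\prob{\cdot}$-weighted sums of $f$ evaluated at the posteriors, decompose the difference bucket-by-bucket, and bound each bucket's contribution by the oscillation of $f$ on the corresponding interval. Your observation that convexity of $f$ is not actually needed for the upper bound (only that $\beta_j$ lies in the same interval as the $\alpha_i$'s, which follows from it being their convex combination) is a nice sharpening---the paper invokes convexity somewhat superfluously at that step.
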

\begin{proof}
Note that as we showed in Subsection~\ref{sec:Sub} we have
\begin{align}\label{eq:FS}
F(S)&=\sum_{z\sim Z} \prob{Z=z} f(\prob{C=0|Z=z})\nonumber\\
&=\sum_{z\sim Z} \prob{Z=z} f\Big(E_{x\in z}\big[\prob{C=0|X=x}\big]\Big).
\end{align}
On the other hand we have
\begin{align}
&  I(X;C)
= \sum_{x\sim X} \prob{X=x} f(\prob{C=0|X=x}) \nonumber\\
= &\sum_{z\sim Z} \sum_{x\in z} \prob{X=x} f(\prob{C=0|X=x}) \nonumber\\
= &\sum_{z\sim Z} \prob{Z=z} \sum_{x\in z} \frac{\prob{X=x}}{\prob{Z=z}} f(\prob{C=0|X=x}) \nonumber\\
= &\sum_{z\sim Z} \prob{Z=z} E_{x\in z} \Big[f\big(\prob{C=0|X=x}\big)\Big].
\label{eq:IXC}
\end{align}

Let $j$ be the index of the interval corresponding to $z$. Then, by
convexity of $f(\cdot)$, we have
\[E_{x\in z} \big[f\big(\prob{C=0|X=x}\big)\big] \leq \max_{r\in
	[\frac{j-1}{k},\frac{j}{k})} f(r)\] and
\[f\big(E_{x\in z} \big[\prob{C=0|X=x}\big]\big) \geq \min_{r\in
	[\frac{j-1}{k},\frac{j}{k})} f(r)\,.\]
Therefore we have
\begin{multline*}
 E_{x\in z} \Big[f\big(\prob{C=0|X=x}\big)\Big] -
f\Big(E_{x\in z} \big[\prob{C=0|X=x}\big]\Big) \\ 
\leq \max_{r\in [(j-1)/k,j/k)} f(r) - \min_{r\in [(j-1)/k,j/k)} f(r) \leq \Delta_{\max} \,.
\end{multline*}
This together with Equations~\eqref{eq:FS} and~\eqref{eq:IXC} show that
$I(X;C) - F(S) \leq \sum_{z\sim Z} \prob{Z=z} \Delta_{\max} =
\Delta_{\max}$ and completes the theorem.
\end{proof}
The above theorem states that the information loss of the bucketing
algorithm is no more than how much $f(\cdot)$ changes within one
interval of size $1/k$. Note that this is an absolute loss and is not
comparable to the approximation guarantee that we provide submodular
maximization. The main problem with the bucketing algorithm is that it
is to some extent oblivious to the input data and, thus, will fail badly for
certain inputs as shown in the following proposition.
\begin{proposition}
There is an input $X$ to the bucketing algorithm such that
$I(X;C) > 0$ and $I(Z;C)=0$, where $Z$ is the output of the bucketing algorithm.
\end{proposition}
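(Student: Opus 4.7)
The plan is to exhibit a small explicit counterexample showing that the bucketing algorithm can collapse all of $X$ into a single bucket while the conditional label distribution still varies nontrivially with $X$.

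First I would fix any integer $k\ge 2$ and take a vocabulary of size $n=2$ with $\prob{X=1}=\prob{X=2}=1/2$. I would then choose two distinct conditional probabilities $\x=\prob{C=0\mid X=1}$ and $\y=\prob{C=0\mid X=2}$ with $\x\neq\y$, both lying in the same bucket interval of the form $[(j-1)/k,j/k)$ for some $j\in\{1,\dots,k\}$; for concreteness take $\x$ and $\y$ to be two distinct points in $[0,1/k)$ (for instance $\x=1/(4k)$ and $\y=1/(2k)$). This is always possible because each bucket is an interval of positive length and therefore contains infinitely many distinct values.

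Next I would verify the two required properties. Since both $X=1$ and $X=2$ are assigned to the same bucket $j$, the random variable $Z$ produced by the bucketing algorithm is constant (it takes the value $j$ almost surely), so $I(Z;C)=0$ by the definition of mutual information. On the other hand, $\x\neq\y$ means that the conditional distribution of $C$ given $X$ depends on the value of $X$, so $C$ and $X$ are not independent and hence $I(X;C)>0$. A short computation using the formula for mutual information (or equivalently identity~\eqref{eq:IXC}) confirms $I(X;C)=\tfrac12 f(\x)+\tfrac12 f(\y)-f\bigl(\tfrac{\x+\y}{2}\bigr)>0$ by strict convexity of $f$ on $(0,1)$.

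There is no real obstacle here; the only point to be slightly careful about is ensuring $\prob{C=0}\in(0,1)$ so that $f$ is well-defined and strictly convex at the relevant arguments, which is guaranteed by choosing $\x,\y\in(0,1)$ with $\x\neq\y$. The construction immediately establishes the proposition.
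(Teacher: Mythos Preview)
Your proposal is correct and follows essentially the same approach as the paper's proof: both constructions place all conditional probabilities $\prob{C=0\mid X=x}$ inside a single bucket interval (so $Z$ is constant and $I(Z;C)=0$) while keeping these conditionals distinct (so $I(X;C)>0$). The paper uses two equal groups with values $(j+1/3)/k$ and $(j+2/3)/k$ in the interval $[j/k,(j+1)/k)$, whereas you take the minimal instance $n=2$ with two distinct values in $[0,1/k)$; the argument is otherwise identical.
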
 
\begin{proof}
Fix a number $j$. In this example for half of the items we have
	$\prob{C=0|X=x} = \frac {j+1/3}{k}$ and for the other half we
	have $\prob{C=0|X=x} = \frac {j+2/3}{k}$. We also set the
	probability of all values of $X$ to be the same, and hence
	$\prob{C=0}=\frac{j+0.5}{k}$. The mutual information of $X$
	with the label is non-zero since $\prob{C=0}\neq
	\prob{C=0|X=x}$. However, the bucketing algorithm merges all
	of the elements in the range $[\frac{j}{k},\frac{j+1}{k})$,
	thereby merging all values together giving us $I(Z;C)=0$ and
	completing the proof.
\end{proof}
Note, we can further strengthen the previous example by giving a tiny
mass to all buckets, so that all values do not collapse into a single
bucket.  However, still in this case, the bucketing method can only
hope to capture a tiny fraction of mutual information since the vast
majority of mass falls into a single bucket.

\paragraph{Frequency Based Filtering:} 
This is very simple compression method (more precisely, a
feature selection method) that is popular in practice. Given a
vocabulary budget, we compute a frequency threshold $\tau$ which we
use to remove any vocabulary value that appears in fewer than $\tau$
instances of our dataset and which results in a vocabulary of the
desired size.
Even though the frequency based algorithm is not entirely oblivious to
the input, it is oblivious to the label and hence oblivious to
conditional distributions. Similar to the bucketing algorithm with
a simple example in the following theorem we show that the frequency
based algorithm fails to provide any approximation gaurantee.
\begin{proposition}
	There is an input $X$ to the frequency based algorithm such
	that $I(X;C)> 0$ and  $I(Z;C)=0$, where $Z$ is the outcome of the frequency based algorithm.
\end{proposition}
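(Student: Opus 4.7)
The plan is to mimic the bucketing counterexample by designing an input where the frequency filter discards precisely those values that carry the mutual information with $C$, while retaining only values whose conditional $\Pr[C=0\mid X=x]$ coincides with the marginal $\Pr[C=0]$.

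First, I would construct a concrete three-value instance. Let $x_0$ be a heavy value with $\prob{X=x_0}=1-2\epsilon$ and $\prob{C=0\mid X=x_0}=1/2$; let $x_1,x_2$ be two light values with $\prob{X=x_1}=\prob{X=x_2}=\epsilon$, $\prob{C=0\mid X=x_1}=0$, and $\prob{C=0\mid X=x_2}=1$. A direct computation gives $\prob{C=0}=1/2$, so by design $x_0$ is individually uninformative, while $x_1$ and $x_2$ are maximally informative but each has tiny mass.

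Second, I would set the frequency threshold $\tau$ strictly between $\epsilon$ and $1-2\epsilon$, so that only $x_0$ survives filtering. Under either natural reading of the filter — dropping filtered values outright, or collapsing them into a single ``other'' token — the resulting $Z$ satisfies $\prob{C=0\mid Z=z}=1/2=\prob{C=0}$ for every atom $z$ in $Z$'s vocabulary. Indeed, in the collapse interpretation the ``other'' bucket pools $x_1$ and $x_2$ symmetrically, averaging their conditionals back to $1/2$. Hence $Z$ is independent of $C$ and $I(Z;C)=0$.

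Finally, I would verify $I(X;C)>0$: since $\prob{C=0\mid X=x_1}=0\neq 1/2=\prob{C=0}$, the KL-divergence term corresponding to $x_1$ in the mutual information expansion contributes a strictly positive amount (and is not cancelled because all terms in the mutual-information sum are non-negative), so $I(X;C)>0$. Putting the three pieces together completes the proof. The only subtle point — and the main thing worth making explicit — is handling both conventions for what ``remove'' means in the filter; the example is engineered so that the argument is robust to this choice, which is why I picked opposing extremes $0$ and $1$ for the rare conditionals with equal mass.
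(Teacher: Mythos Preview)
Your proposal is correct and follows essentially the same approach as the paper: both constructions place the mass on ``heavy'' values whose conditional equals the marginal $\Pr[C=0]=1/2$, and put the informative content on ``light'' values with symmetric extreme conditionals $0$ and $1$, so that filtering retains only uninformative values and merging the discarded ones averages back to $1/2$. The only cosmetic difference is scale---the paper uses $3k$ values ($k$ heavy, $2k$ light) so that the example works for an arbitrary vocabulary budget $k$, whereas your three-value instance is the $k=1$ case.
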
 
\begin{proof}
Assume we have $n=3k$ values for $X$, namely $x_1,\dots,x_n$. For all
	$i \in \{1,\dots,k\}$ define $\prob{X=x_i} = 2/n$,
	and for all $i \in \{k+1,\dots,n\}$ we have $\prob{X=x_i} =
	0.5/n$. Note that the first $k$ values are the most
	frequent values, however, we are going to define them such that
	they are independent of the label.

	For $\forall i \in \{1,\dots,k\}$ let $\prob{X=x_i|C=0} =
	1/2$; for $\forall i \in \{k+1,\dots,2k\}$ let
	$\prob{X=x_i|C=0} = 0$; and for $\forall i \in \{2k+1,\dots,3k\}$
	let $\prob{X=x_i|C=0} = 1$. Note that we have
	$\prob{C=0}=\frac 1 2$. Therefore the mutual information of
	the $k$ most frequent values with the label is zero, which
	implies for a certain vocabulary budget, and thereby
	frequency threshold, $I(Z;C)=0$. Observe that even if we merge
	the last $2k$
	values and use it as a new value (as opposed to ignoring
	them), the label corresponding the the merged value is $0$
	with probability half, and hence has no mutual information
	with the label. However, we have
	$I(X;C)=\sum_{x\sim X} \prob{X=x} f(\prob{C=0|X=x})
	= \sum_{i=k+1}^{2k} \frac{0.5}{n} = \frac{0.5 \times 2k}{3k} =
	\frac{1}{3} > 0$, which completes the proof.
\end{proof}

\section{Empirical Evaluation}
\label{sec:empirical}

\def\Submod{{\sc Submodular}}
\def\DMK{{\sc Divisive}}
\def\KY{KY}
\def\Freq{{\sc Frequency}}
\def\Bucket{{\sc Bucketing}}

In this section we report our empirical evaluation of the
optimization the submodular function $F(S)$ described in the previous
section.  All the experiments are performed using the Criteo click
prediction dataset \citep{criteo}, which consists of 37 million instances for training and 
4.4 million held-out points.
\footnote{
Note, we use the labeled training file from this challenge and
chronologically partitioned it into train/hold-out sets.
}
In addition to 13 numerical features, this dataset contains 26
categorical features with a combined total vocabulary of more than 28
million values.  These features have varying vocabulary sizes, from a
handful up to millions of values.  Five features, in particular, have
more than a million distinct feature values each.

In order to execute a mutual information based algorithm, we require
estimates of the conditional probabilities $\prob{C=0|X=x_i}$ and
marginal probabilities $\prob{X=x_i}$. Here, we simply use
the maximum likelihood estimate based on the empirical count, i.e.
given a sample of feature value and label pairs $\big((\hat
x_1, \hat c_1), \ldots, (\hat x_k, \hat c_k) \big)$, we have
\begin{align*}
\reallywidehat{\Pr} [ X=x_i] & =
 \frac{1}{k}\sum_{j=1}^k  {\bf 1}\{ \hat x_j = x_i \} \,, \\
\reallywidehat{\Pr} \left[C = 0 | X=x_i\right] & =
\frac{\frac{1}{k}\sum_{j=1}^k  {\bf 1}\{\hat c_j = 0 \wedge \hat x_j = x_i\}
}{\reallywidehat{\Pr} [ X=x_i]} \,.
\end{align*}
We note that such estimates may sometimes be poor, especially when
certain feature values appear very rarely. Evaluating more
robust estimates is outside the scope of the current study, but an
interesting direction for future work.

\subsection{Mutual information evaluation}

We first evaluate the ability of our algorithm to maximize the mutual
information retained by the compressed vocabulary and compare it to
other baseline methods.

In particular, we compare our algorithm to the iterative divisive
clustering algorithm introduced by \citet{dhillon2003}, as well as the
frequency filtering and bucketing heuristics introduced in the
previous section.  The divisive clustering algorithm resembles a
version of the $k$-means algorithm where $k$ is set to be the
vocabulary size and distances between points and clusters are defined
in terms of the KL divergence between the conditional distribution of
the label variable given a feature value and the conditional
distribution of the label variable given a cluster center.  Notice
that due to the large size of the dataset, we cannot run the dynamic
programming algorithm introduced by~\citet{kurkoski2014} which would
find the theoretically optimal solution.  For ease of reference, we
call our algorithm \Submod, and the other algorithms \DMK, \Bucket\ and
\Freq.


Note that our algorithm, as well as previous algorithms, seek to
maximize the mutual information between a \emph{single} categorical
variable and the label, while in the Criteo dataset we have several
categorical variables that we wish to apply a global vocabulary size budget
to. In the case of the \Freq\ heuristic, this issue is addressed by
sorting the counts of feature values across all categorical variables
and applying a global threshold.
In the case of \Submod, we run the
almost linear-time algorithm for each categorical variable to obtain a
sorted list of feature values and their marginal contributions to the
global objective.  Then we sort these marginal values and pick the
top-score feature values to obtain the desired target vocabulary size.
Thus, both \Submod\ and \Freq\ are able to naturally employ \emph{global}
strategies in order to allocate the total vocabulary budget across
different categorical features.

For the \DMK\ and \Bucket\ algorithms, a natural global allocation
policy is not available, as one needs to define an allocation of the
vocabulary budget to each categorical feature a priori.  In this
study, we evaluate two natural allocation mechanisms.  The
\emph{uniform allocation} assigns a uniform budget across all
categorical features, whereas the \emph{MI allocation} assigns a
budget that is proportional to the mutual information of the
particular categorical feature.




The original vocabulary of over 28 million values is compressed by a
factor of up to 2000.  Using the methods mentioned above, we obtain
vocabularies of size 10K, 20K, 40K, 80K, 120K and 160K.  Then we
compute the loss in average mutual information, which is defined as
follows: let $X_i$ denote the mutual information of uncompressed
categorical feature $i$ with the label and $Z_i$ denote mutual
information of the corresponding compressed feature, then
the average mutual information loss is equal to $(\sum_i X_i -
Z_i)/(\sum_j X_j)$.

\begin{figure}
	\centering
	\includegraphics[width=0.5\textwidth]{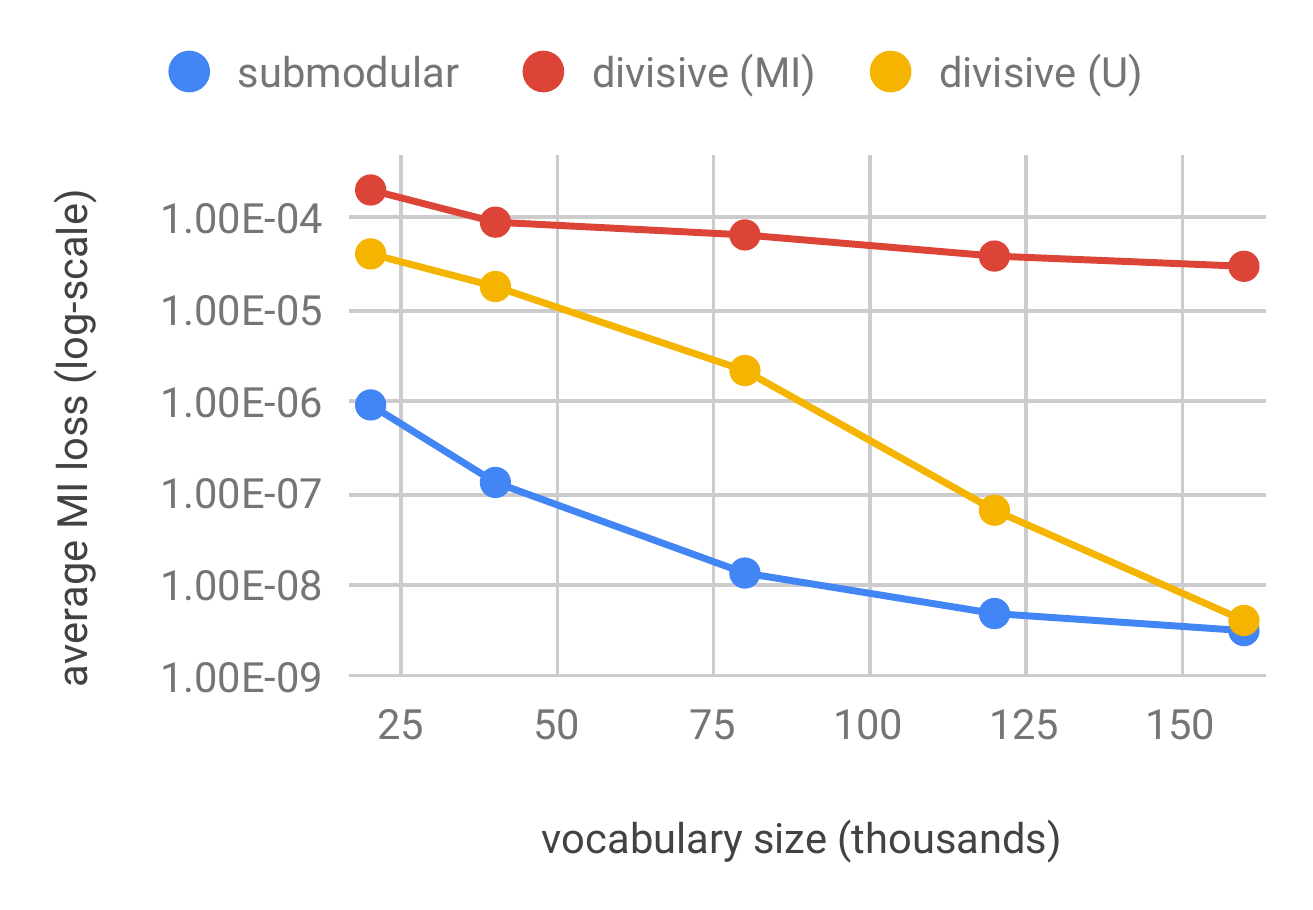}
\caption{The average mutual information loss of several compression
methods measured on to the Criteo dataset.}
\label{fig:mi_train}
\end{figure}


For the heuristic \Freq\ algorithm, the measured loss ranges from
0.520 (for budget of 160K) to 0.654 (for budget of 10K), while for
\Bucket\ the loss ranges from $5\times 10^{-6}$ to $5\times 10^{-3}$. As
expected, the mutual information based methods perform significantly
better, in particular, the loss for \Submod\ ranges from $9\times
10^{-7}$ to $3 \times 10^{-9}$ and consistently outperforms the \DMK\
algorithm (regardless of allocation strategy).
Figure~\ref{fig:mi_train} provides a closer look at the mutual
information based methods.
Thus, we find that not only is our method fast, scalable and exhibits
a theoretical $1-1/e$ lower bound on the performance, but that in
practice it maintains almost all the mutual information between data
points and the labels.




\begin{figure}
        \centering
        \includegraphics[width=0.5\textwidth]{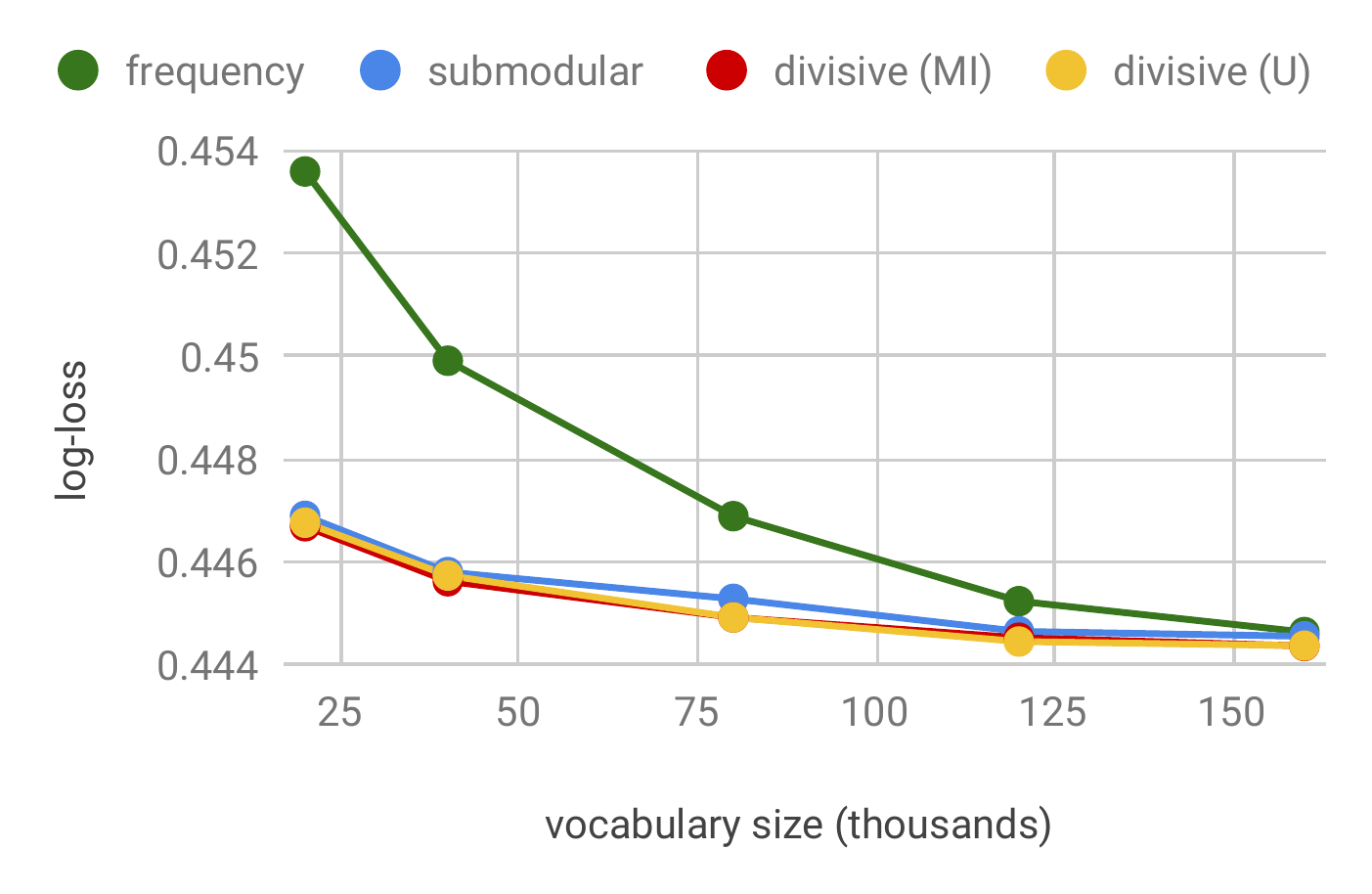}
\caption{The log-loss of a neural network model trained with
compressed vocabularies of several sizes and using several different
compression methods.}
\label{fig:logloss}
\end{figure}

\subsection{Learning evaluation}

Our focus thus far has been in optimizing the mutual information
objective. In this section we also evaluate the compressed
vocabularies in an end-to-end task to demonstrate its application in a
learning scenario.

Given a compressed vocabulary we train a neural network model on the
training split and measure the log-loss on the hold out set (futher
details in supplement Section~\ref{sec:emp_supplement}).\footnote{In
order to alleviate the potential issue of poor conditional/marginal
distribution estimates we initially start with only features values
that appear in at least 100 instances.} In Figure~\ref{fig:logloss} we
see that the mutual information based methods perform comparably to
each other and significantly outperform popular heuristic method
\Freq. We observe that our scalable quasi-linear compression algorithm
with provable approximation guarantees also performs competitively in
end-to-end learning.

\section{Conclusion}
\label{sec:conclusions}
In this work we have shown the first scalable quasi-linear compression
algorithm for maximizing mutual information with the label that also
exhibits and $1-1/e$ factor approximation guarantee. The algorithm, as
well as our insights into constructing a submodular objective function,
might be of interest in other applications as well (for example,
quantization in DMC). One future line of work is extending this work to
the multiclass (non-binary) setting.

\bibliographystyle{plainnat}
\bibliography{vocab_compression}

\clearpage
\onecolumn
\appendix
\section{Supplement}

\subsection{Proof of technical lemmas}
\begin{proof}[Proof of Lemma~\ref{lem:delta_F}]
Let $Z$ and $Z'$ be
the random variables corresponding to $F(S\cup \{s\})$ and $F(S)$
respectively. Note that we have
\begin{align*}
 F(S)
&= \sum_{z'\sim Z'} \sum_{c\in\{0,1\}}
	\prob{Z'=z',C=c}\log\frac{\prob{Z'=z',C=c}}{\prob{Z'=z'}\prob{C=c}}\\
	&=\sum_{z'\sim Z'} \prob{Z'=z'} 
	\sum_{c\in\{0,1\}}
	\prob{C=c|Z'=z'}\log\frac{\prob{C=c|Z'=z'}}{\prob{C=c}} \\
&=\sum_{z'\sim Z'} \prob{Z'=z'} f(\prob{C=0|Z'=z'}),
\end{align*}
where we have
\begin{equation*}
f(t) =  t\log\frac{t}{\prob{C=0}} +(1-t)\log\frac{1-t}{\prob{C=1}} \,,
\end{equation*}
which is a convex function over $t \in [0,1]$.
Next, we have
\begin{align*}\label{eq:delta0}
\Delta_s F(S)
={}& F(S\cup \{s\}) - F(S)\nonumber \\
={}& \sum_{z\sim Z} \prob{Z=z} f(\prob{C=0|Z=z}) - \sum_{z'\sim Z'} \prob{Z'=z'} f(\prob{C=0|Z'=z'}) \nonumber\\
={}& \prob{Z=s'} f(\prob{C=0|Z=s'}) + \prob{Z=s} f(\prob{C=0|Z=s}) \\
&- \prob{Z'=s'} f(\prob{C=0|Z'=s'}).
\end{align*}
Notice that $Z'=s'$ implies that $Z=s$ or $Z=s'$. Hence we have $\prob{Z'=s'} = \prob{Z=s'} + \prob{Z=s}$ and
\begin{align*}
\prob{C=0|Z'=s'} = \frac{\prob{Z=s'} \prob{C=0|Z=s'} +
	\prob{Z=s}\prob{C=0|Z=s}}{\prob{Z=s'}
	+ \prob{Z=s}} \,.
\end{align*}
Now, if we set $p=\prob{Z=s'}$, $q=\prob{Z=s}$,
$\x=\prob{C=0|Z=s'}$ and $\y=\prob{C=0|Z=s}$, and combine the previous
two inline equalities, we have
\begin{equation*}
\Delta_s F(S) =
{pf(\x)+ q f(\y) } - (p+q)f\big(\frac{p\x+q\y}{p+q}\big) \,.
\end{equation*}
\end{proof}

{\bf Some Basic Tools:}
In Lemmas \ref{lm:firstProperty} and \ref{lm:secondProperty} we show two basic properties of convex functions that later become handy in our proof.
We use the following property of convex functions to prove Lemma \ref{lm:firstProperty}. For any convex function $f$ and any three numbers $a<b<c$ we have 
\begin{align}\label{eq:convexDef1}
	\frac{f(b)-f(a)}{b-a} \leq \frac{f(c)-f(b)}{c-b}. 
\end{align}
Note that this also implies
\begin{align}\label{eq:convexDef2}
	\frac{f(c)-f(a)}{c-a} &= \frac{1}{c-a}\big(f(c)-f(b) + f(b)-f(a)\big)\nonumber\\
	&\leq \frac{1}{c-a}\Big(f(c)-f(b) + \frac{b-a}{c-b}\big(f(c)-f(b)\big)\Big) &\text{By Inequality \ref{eq:convexDef1}}\nonumber\\
	& = \frac{1}{c-a}\Big(\frac{c-b + b-a}{c-b}\big(f(c)-f(b)\big)\Big)\nonumber\\
	& = \frac{f(c)-f(b)}{c-b}.
\end{align}
Similarly we have 
\begin{align}\label{eq:convexDef3}
\frac{f(c)-f(a)}{c-a} &=\frac{1}{c-a}\big(f(c)-f(b) + f(b)-f(a)\big)\nonumber\\
&\geq \frac{1}{c-a}\Big(\frac{c-b}{b-a}\big(f(b)-f(a)\big) + f(b)-f(a)\Big)&\text{By Inequality \ref{eq:convexDef1}}\nonumber \\
&\geq \frac{1}{c-a}\Big(\frac{c-b+b-a}{b-a}\big(f(b)-f(a)\big) \Big)\nonumber\\
&= \frac{f(b)-f(a)}{b-a}. 
\end{align}

\begin{proof}[Proof of Lemma~\ref{lm:firstProperty}] 
	First, we prove 
	\begin{align}\label{eq:firstPropEq0}
		\frac{f(p\x+q\z)-f(p\x+q\y)}{q\z-q\y} \leq \frac{f(\z)-f(\y)}{\z-\y}.
	\end{align}
	Recall that $\x\leq \y\leq \z$, and $p+q=1$. Hence we have $p\x+q\y \leq p\x+q\z, \y \leq \z$. We prove Inequality \ref{eq:firstPropEq0} in two cases of $p\x+q\z \leq \y$, and $\y < p\x+q\z$.
	\\
	{\bf{Case 1.}} In this case we have $p\x+q\y \leq p\x+q\z \leq \y \leq \z$. we have
	\begin{align*}
		\frac{f(p\x+q\z)-f(p\x+q\y)}{q\z-q\y} &= \frac{f(p\x+q\z)-f(p\x+q\y)}{(p\x+q\z)-(p\x+q\y)}\\
		&\leq \frac{f(\y) - f(p\x+q\z)}{\y -(p\x+q\z)} &\text{By Inequality \ref{eq:convexDef1}}\\
		&\leq \frac{f(\z) - f(\y)}{\z-\y} &\text{By Inequality \ref{eq:convexDef1}}
	\end{align*}
	\\
	{\bf{Case 2.}} In this case we have $p\x+q\y \leq \y \leq p\x+q\z  \leq \z$. we have
	\begin{align*}
	\frac{f(p\x+q\z)-f(p\x+q\y)}{q\z-q\y} &= \frac{f(p\x+q\z)-f(p\x+q\y)}{(p\x+q\z)-(p\x+q\y)}\\
	&\leq \frac{f(p\x+q\z)-f(\y)}{(p\x+q\z)-\y} &\text{By Inequality \ref{eq:convexDef2}}\\
	&\leq \frac{f(\z)-f(\y)}{\z-\y} &\text{By Inequality \ref{eq:convexDef3}}.
	\end{align*}
	Next we use Inequality \ref{eq:firstPropEq0} to prove the lemma. By multiplying both sides of Inequality \ref{eq:firstPropEq0} by $q(\z-\y)$ we have 
	\begin{align*}
		{f(p\x+q\z)-f(p\x+q\y)} \leq {qf(\z)-qf(\y)}.
	\end{align*}
	By rearranging the terms and adding $pf(\x)$ to both sides we have
	\begin{align*}
		\big(pf(\x) +q f(\y) \big) - f(p\x+q\y) \leq  
		\big(pf(\x) +q f(\z) \big) - f(p\x+q\z),  
	\end{align*}
	as desired.
\end{proof}

\begin{proof}[Proof of Lemma~\ref{lm:secondProperty}]
	We have
	\begin{align*}
		\frac{p+q}{p+q'} f\big(\frac{p\x+q\y}{p+q}\big) + \frac{q'-q}{p+q'}f(\y) 
		&\geq f\big(\frac{p+q}{p+q'} \frac{p\x+q\y}{p+q} + \frac{q'-q}{p+q'}\y\big) &\text{By convexity}\\
		&=f\big(\frac{p\x+q\y}{p+q'} + \frac{q'-q}{p+q'}\y\big)\\
		&=f\big(\frac{p\x+q'\y}{p+q'}\big).
	\end{align*}
	By multiplying both sides by $p+q'$ we have
	\begin{align*}
	(p+q) f\big(\frac{p\x+q\y}{p+q}\big) + q'f(\y)-qf(\y) 
	&\geq (p+q'	)f\big(\frac{p\x+q'\y}{p+q'}\big).
\end{align*}
	By rearranging the terms and adding $pf(\x)$ to both sides we have
	\begin{align*}
	pf(\x) +q f(\y)  - (p+q)f\big(\frac{p\x+q\y}{p+q}\big) \leq  
	pf(\x) +q' f(\y)  - (p+q')f\big(\frac{p\x+q'\y}{p+q'}\big),
	\end{align*}
	as desired.
\end{proof}

\subsection{Empirical Evaluation Details}
\label{sec:emp_supplement}
We implement the neural network using TensorFlow and train it using the
AdamOptimizer \citep{abadi2016,kingma}. The following set of neural network
hyperparameters are tuned by evaluating 2000 different configurations on the
hold-out set as suggested by a Gaussian Process black-box optimization
routine.
\begin{center}
\begin{tabular}{l|c}
	{\bf hyperparameter} & {\bf search range} \\
	\hline
	hidden layer size & [100, 1280] \\
	num hidden layers & [1, 5] \\
	learning rate & [1e-6, 0.01] \\
	gradient clip norm & [1.0, 1000.0] \\
	$L_2$-regularization & [0, 1e-4]
\end{tabular}
\end{center}

\end{document}